\newtheorem*{thm}{Theorem}
\newtheorem{lemma}{Lemma}
\newcommand{\diam}{\operatorname{diam}}
\theoremstyle{definition}
\theoremstyle{remark}
\newcommand{\conv}{\operatorname{conv}}
\def\@setthanks{\vspace{-\baselineskip}\def\thanks##1{\@par##1\@addpunct.}\thankses}
\begin{document}

\title[]{Clustering with t-SNE, provably.}

\author[]{George C. Linderman}
\thanks{GCL was supported by NIH grant \#1R01HG008383-01A1 (PI: Yuval Kluger) and U.S. NIH MSTP Training Grant T32GM007205. SS was partially supported by \#INO15-00038 (Institute of New Economic Thinking).}
\address{Program in Applied Mathematics, Yale University, New Haven, CT 06511, USA}
\email{george.linderman@yale.edu}

\author[]{Stefan Steinerberger}
\address{Department of Mathematics, Yale University, New Haven, CT 06511, USA}
\email{stefan.steinerberger@yale.edu}

\begin{abstract} 
t-distributed Stochastic Neighborhood Embedding (t-SNE), a clustering
and visualization method proposed by van der Maaten \& Hinton in 2008,
has rapidly become a standard tool in a number of natural
sciences. Despite its overwhelming success, there is a distinct lack of
mathematical foundations and the inner workings of the algorithm are
not well understood. The purpose of this paper is to prove that t-SNE
is able to recover well-separated clusters; more precisely, we prove
that t-SNE in the `early exaggeration' phase, an optimization technique proposed
by van der Maaten \& Hinton (2008) and van der Maaten (2014), can be
rigorously analyzed.
As a byproduct, the proof suggests novel ways for setting the
exaggeration parameter $\alpha$ and step size $h$. Numerical examples
illustrate the effectiveness of these rules: in particular, the quality
of embedding of topological structures (e.g. the swiss roll) improves.
We also discuss a connection to spectral clustering methods.
\end{abstract}

\maketitle

\section{Introduction and main result}
\subsection{Introduction.}The analysis of large, high dimensional datasets is
ubiquitous in an increasing number of fields and vital to their progress.
Traditional approaches to data analysis and visualization often fail in the
high dimensional setting, and it is common to perform dimensionality reduction
in order to make data analysis tractable.  t-distributed Stochastic
Neighborhood Embedding (t-SNE), introduced by \cite{maaten2008visualizing}, is
an impressively effective non-linear dimensionality reduction technique that
has recently found enormous popularity in several fields.  It is most commonly
used to produce a two-dimensional embedding of high dimensional data with the
goal of simplifying the identification of clusters.  Despite its tremendous
empirical success, the theory underlying t-SNE is unclear. The only theoretical
paper at this point is \cite{uri}, which shows that the structure of the loss
functional of SNE (a precursor to t-SNE) implies that global minimizers
separate clusters in a quantitative sense.

\subsection{A case study.}  As an unsupervised learning method, t-SNE is
commonly used to visualize high dimensional data and provide crucial intuition
in settings where ground truth is unknown.  The analysis of single cell RNA
sequencing (scRNA-seq) data, where t-SNE has become an integral part of the
standard analysis pipeline, provides a relevant example of its usage. 
\begin{center}
\begin{figure}[h!]
\includegraphics[width=0.75\textwidth]{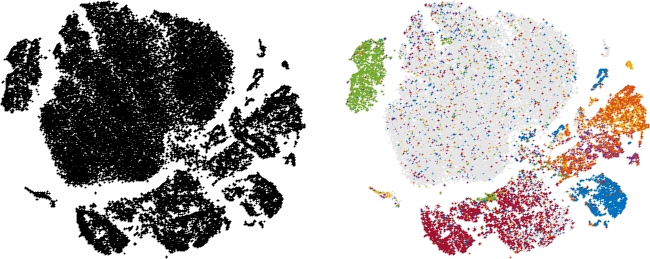}
\caption{t-SNE output (left) and colored by some known ground truth (right).}
\label{fig:case_study}
\end{figure}
\end{center}
Figure \ref{fig:case_study} shows (left) the output of running t-SNE on the 30 largest principal components of the normalized
expression matrix of 49300 retinal cells taken from \cite{macosko2015highly}.
The output on the right has cells colored based on which of 12 cell type marker genes were
most expressed (with grey signifying that none of the marker genes were
expressed). This example is well suited to showcase both the tremendous impact
of t-SNE in the medical sciences as well as the inherent difficulties of
interpreting its output when ground truth is unknown: how many clusters
are in the original space, and do they correspond one-to-one to clusters in
the t-SNE plot? Do the clusters (e.g. the largest cluster that does not express
any marker genes) have substructure that is not apparent in this visualization?
Pre-processing steps will yield different embeddings; how stable are the
clusters? All these questions are of the utmost importance and underline the
need for a better theoretical understanding.  

\subsection{Early Exaggeration} 
t-SNE (described in greater
detail in \S\ref{sec:tsne}) minimizes the
Kullback-Leibler divergence between a Gaussian distribution modeling distances between points in
the high dimensional input space and a Student t-distribution modeling
distances between corresponding points in a low dimensional embedding.
Given a $d$-dimensional input dataset $\mathcal{X} = \{ x_1, 
x_2, ...,  x_n \} \subset \mathbb{R}^d$,  t-SNE computes an $s$-dimensional embedding of the
points in $\mathcal X$, denoted by $\mathcal Y = \{  y_1,  y_2, ...,  y_n
\} \subset \mathbb{R}^s$, where $s \ll d$ and most commonly $s= 2 \text{ or } 3$. The main idea is to
define a series of affinities $p_{ij}$ on $\mathcal{X}$ as well as a series of
affinities $q_{ij}$ in the embedding $\mathcal{Y}$ and then try minimize the distance
of these distributions in the Kullback-Leibler distance
$$C(\mathcal Y) = KL(P || Q) = \sum_{i\neq j} p_{ij} \log \frac{p_{ij}}{q_{ij}},$$
which gives rise to a gradient descent method via
$$\frac{\partial C}{\partial  y_i } = 4 \sum\limits_{j \neq i} (p_{ij} - q_{ij})q_{ij} Z ( y_i -  y_j).$$
One difficulty is that the convergence rate slows down as the number of points $n$ increases. However,
already the original paper \cite{maaten2008visualizing} proposes a number of
ways in which the convergence can be accelerated.
\begin{quote}
A less obvious way to improve the optimization, which we call `early exaggeration', is to
multiply all of the $p_{ij}$'s by, for example, 4, in the initial stages of the optimization. [...]
In all the visualizations presented in this paper and in the supporting material, we used exactly
the same optimization procedure. We used the early exaggeration method with an exaggeration
of 4 for the first 50 iterations (note that early exaggeration is not included in the pseudocode in
Algorithm 1). (from: \cite{maaten2008visualizing})
\end{quote}

It is easy to test empirically that this renormalization indeed improves the
clustering and is effective. It has become completely standard and is
hard-coded into the very widely used standard implementation available online,
as described by \cite{van2014accelerating}:

\begin{quote}
During the first 250 learning iterations, we multiplied all $p_{ij}-$values by a user-defined
constant $\alpha > 1$. [...] this trick enables
t-SNE to find a better global structure in the early stages of the optimization by creating
very tight clusters of points that can easily move around in the embedding space. \textit{In
preliminary experiments, we found that this trick becomes increasingly important to obtain
good embeddings when the data set size increases} [Emphasis GL \& SS], as it becomes harder for the optimization
to find a good global structure when there are more points in the embedding because there
is less space for clusters to move around. In our experiments, we fix $\alpha = 12$ (by contrast,
van der Maaten and Hinton (2008) used $\alpha=4$). (from: \cite{van2014accelerating})
\end{quote}

As it turns out, this simple optimization trick can be rigorously analyzed.
 As a byproduct of our analysis, we
see that the convergence of non-accelerated t-SNE will slow down as the number of points $n$ increases
and the number of iterations required will grow at least linearly in $n$. The implementation
available online counteracts this problem by various methods: (1) the early exaggeration factor $\alpha$,
(2) a large ($h=200$) stepsize in the gradient descent
$$ y_i(t+1) = y_i(t) - h\frac{\partial C}{\partial y_i(t)},$$ and by (3)
optimization techniques such as momentum. We only deal with the t-SNE
algorithm, the early exaggeration factor $\alpha$ and the step-size $h$; one of
the main points of our paper is that a suitable parameter selection of $\alpha$ and $h$ 
makes it possible to guarantee fast convergence without additional
optimization techniques.

\subsection{Summary of Main Results.} We will now state our main results at an informal level; all the
statements can be made precise (this is done in \S \ref{subsec:main_result}) and will be rigorously proven. 
\begin{enumerate}
\item \textit{Canonical parameters and exponential convergence.} 
There is a canonical setting for the parameters $\alpha, h$ for which
the algorithm applied to clustered data converges provably at an
exponential rate without the use of other optimization techniques (such
as momentum). This setting is
$$ \alpha \sim \frac{n}{10} \qquad \mbox{and} \qquad h \sim 1.$$
These parameters lead to an exponential convergence of all embedded clusters to small balls (whose diameter
depends on how well $\mathcal{X}$ is clustered).  Generally, the speed of convergence is exponential with an exponential factor $\kappa$
$$\kappa \sim 1 - \frac{\alpha h}{n}.$$
Moreover, if $\alpha h \gtrsim n$, then convergence of the algorithm breaks down. This theoretical
result is actually applicable to the early exaggeration phase of the classical t-SNE implementation as long as the number of points is not too
large (roughly, $n \lesssim 20000$).

\item \textit{Spectral clustering.} The t-SNE algorithm, in this regime,
	behaves like a spectral clustering algorithm; moreover, this algorithm
	can be written down explicitly.  This allows for (1) the use of theory
	from spectral clustering to rigorously analyze t-SNE and (2) a
	fast implementation that can perform the early exaggeration phase in a
	fraction of the time necessary to run t-SNE (in this regime). It also poses the
	challenge of trying to understand whether t-SNE behaves qualitatively
	different for the standard parameters $\alpha \sim 12, h \sim 200$ or
	whether it behaves more or less identically (and thus like a spectral
	method). 

\item \textit{Disjoint clusters.}
	It is not guaranteed that the embedded clusters in $\mathcal{Y}$ are
	disjoint; but given a random initialization, it is extremely unlikely
	that two distinct clusters will converge to the same center.
	Furthermore, if $\mathcal{X}$ is well-clustered, the diameter of the
	clusters $\mathcal{Y}$ can be made even smaller by decreasing the
	step-size $h$ and further increasing $\alpha$ as long as the product
	satisfies $\alpha h \sim n/10$. Increasing $\alpha$ will resolve overlapping
	clusters, as long as they have different centers.  In particular, the
	number of disjoint clusters in $\mathcal{Y}$ is a lower bound on the
	number of clusters in $\mathcal{X}$ (and, generically, the numbers coincide).

\item \textit{Independence of initialization.} All these results are independent of the initialization of $\mathcal{Y}$ as long as it is contained in a sufficiently
small ball.
\end{enumerate}

An immediate implication of (3) is the following: if we are given some
clustered data $\mathcal{X}$ and see that the embedding of t-SNE for large
values of $\alpha$ (and small values of $h$) produces $k$ clusters, then there
are exactly $k$ clusters in $\mathcal{X}$. The results guarantee that all
clusters in $\mathcal{X}$ are eventually mapped to small balls which can be
made arbitrarily small.  We see that when parameters are chosen optimally, this
result provides a justification for the way t-SNE is commonly used in, say,
biomedical research.  
\subsection{Approximating Spectral Clustering.} The fact that t-SNE approximates a spectral 
clustering method for $\alpha \sim n/10, ~h \sim 1$ raises a fascinating question: does t-SNE, in its early exaggeration
phase, perform better with the classical parameter choices of $\alpha \sim 12, h \sim 200$ than it does
with $\alpha \sim n/10,~ h \sim 1$? If yes, then its inner workings may give rise to improved spectral
methods. If no, then it would be advantageous to use $\alpha \sim n/10, ~h \sim 1$, which then, however,
is essentially a spectral method and it may be advantageous (and much faster) to initialize the second phase of t-SNE
by using the outcome of a more advanced spectral method as initialization instead. We discuss some 
experiments in that direction in \S\ref{subsec:spectral} and believe this to be worthy of further investigation. Moreover, we
 describe a visualization technique in the style of t-SNE for spectral clustering tools (see \S \ref{sec:visualizing_spec_clust}).

\subsection{Organization.} The Organization of this paper is as follows: we
first illustrate our main points with some numerical examples in Section \S
\ref{sec:numerical}. Section \S \ref{sec:tsne} establishes notation and a
formal statement of our main result,  Section \S \ref{sec:spectral} derives a
connection between t-SNE and spectral clustering, Section \S
\ref{sec:dynamical} discusses a certain type of discrete dynamical system on
finite numbers of points and establishes a crucial estimate, Section \S
\ref{sec:proof_main_result} gives a proof of the main result.

\section{Numerical examples}\label{sec:numerical}
This section discusses a number of numerical examples to illustrate our main points.

\subsection{Lines and Swiss roll.} 
It is classical that t-SNE does not successfully embed the swiss roll; however, the random initialization
causes difficulty even on simpler data: Figure \ref{fig:classical_tsne} shows the t-SNE embedding
(using Matlab implementation of \cite{van2014accelerating} with default parameters) of a
simple line in $\mathbb{R}^3$.
\begin{center}
\begin{figure}[h!]
\begin{tikzpicture}[scale=1]
\node at(0,0) {\includegraphics[width=0.8\textwidth]{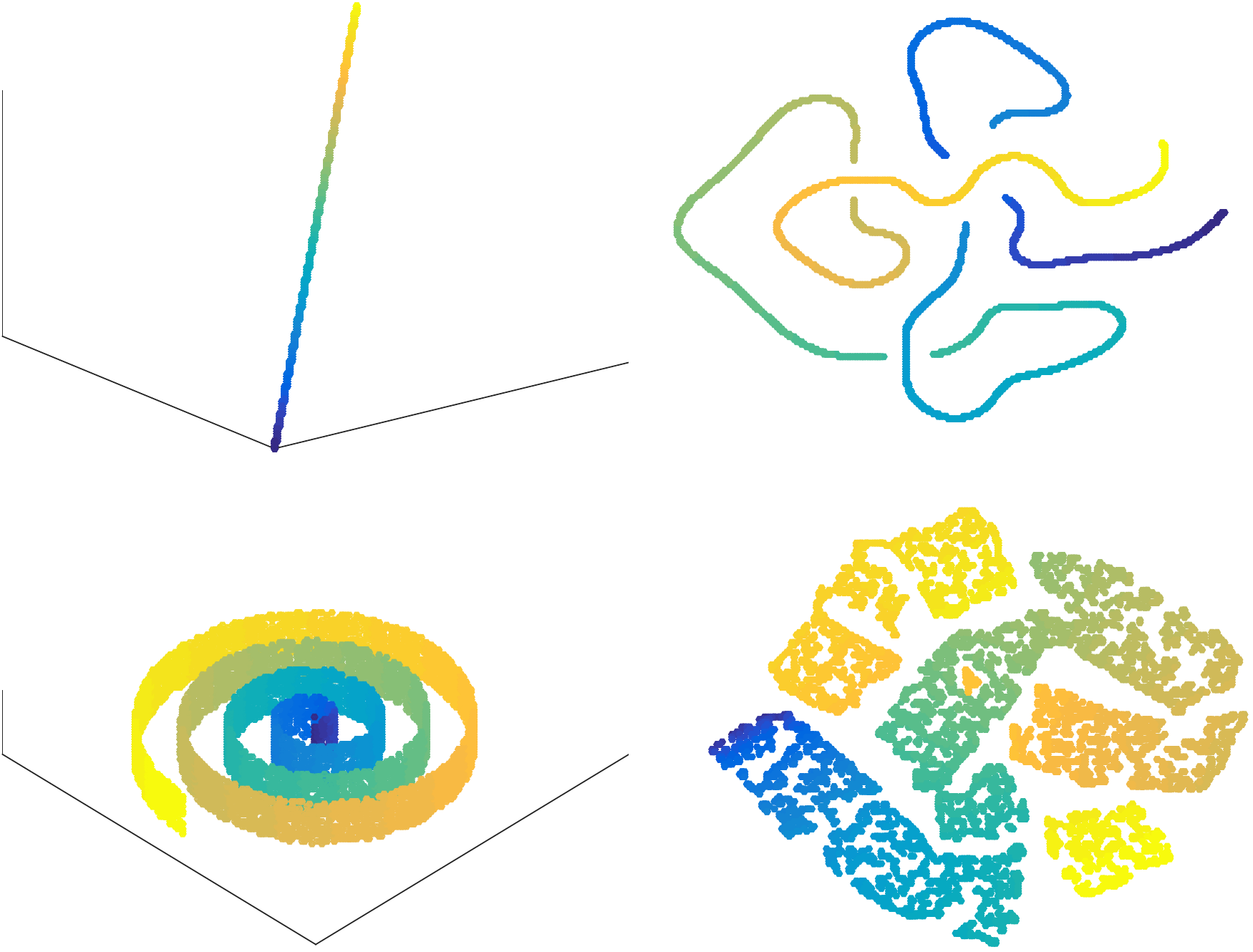}};
\node at (-3,5) {Input: a line in $\mathbb{R}^3$};
\node at (-3,-1) {Input: swiss roll in $\mathbb{R}^3$};
\node at (3,5) {t-SNE output in $\mathbb{R}^2$};
\node at (3,0) {t-SNE output in $\mathbb{R}^2$};
\end{tikzpicture}
\caption{Classical t-SNE embeddings of a line and the swiss roll.}
\label{fig:classical_tsne}
\end{figure}
\end{center}

The randomized initialization causes, after initial contraction in the early
exaggeration phase, a topological interlocking that cannot be further resolved.
The example is even more striking with the swiss roll, where the random
initialization leads to `knots' that cannot be untied by t-SNE. In stark
contrast, the parameter selection $$ \alpha \sim \frac{n}{10} \qquad \mbox{and}
\qquad h \sim 1$$ allows for a more effective early exaggeration phase that
clearly recovers the line from random initial data and even contracts the swiss
roll to a correctly ordered line (that would then expand in the second phase of
the algorithm).  

\begin{center}
\begin{figure}[h!]
\begin{tikzpicture}
\node at(0,0) {\includegraphics[width=\textwidth]{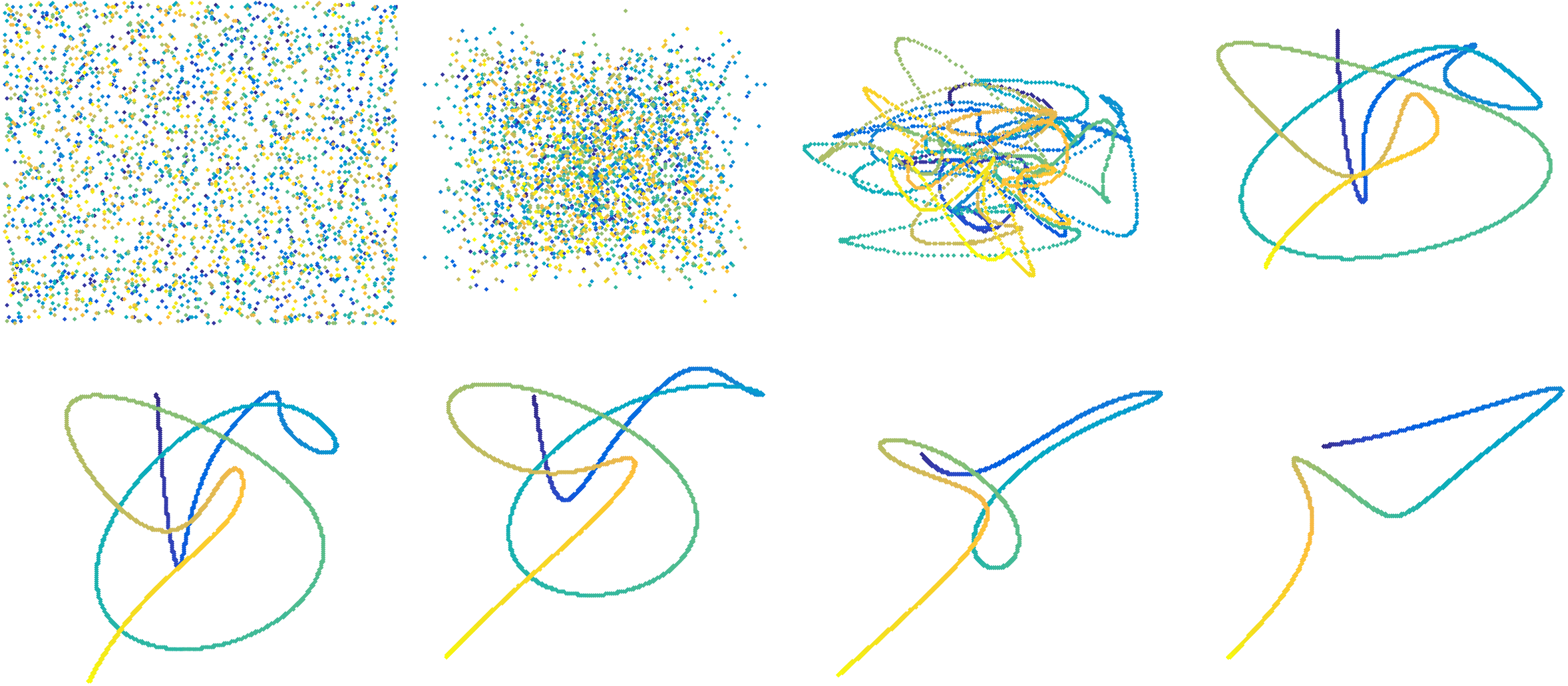}};
\node at (-5.6,3.5) {Initialization};
\node at (-1.8,3.5) {Step 1};
\node at (2,3.5) {Step 5};
\node at (5.5,3.5) {Step 200};
\node at (-5.6,-0.2) {Step 300};
\node at (-1.8,-0.2) {Step 500};
\node at (2.3,-0.2) {Step 1000};
\node at (5.5,-0.2) {Step 2000};
\end{tikzpicture}
\caption{Early exaggeration phase of t-SNE on a line with $\alpha = 20n, h = 0.05$.}
\label{fig:line}
\end{figure}
\end{center}
\vspace{-20pt}

\begin{center}
\begin{figure}[h!]
\begin{tikzpicture}
\node at(0,0) {\includegraphics[width=\textwidth]{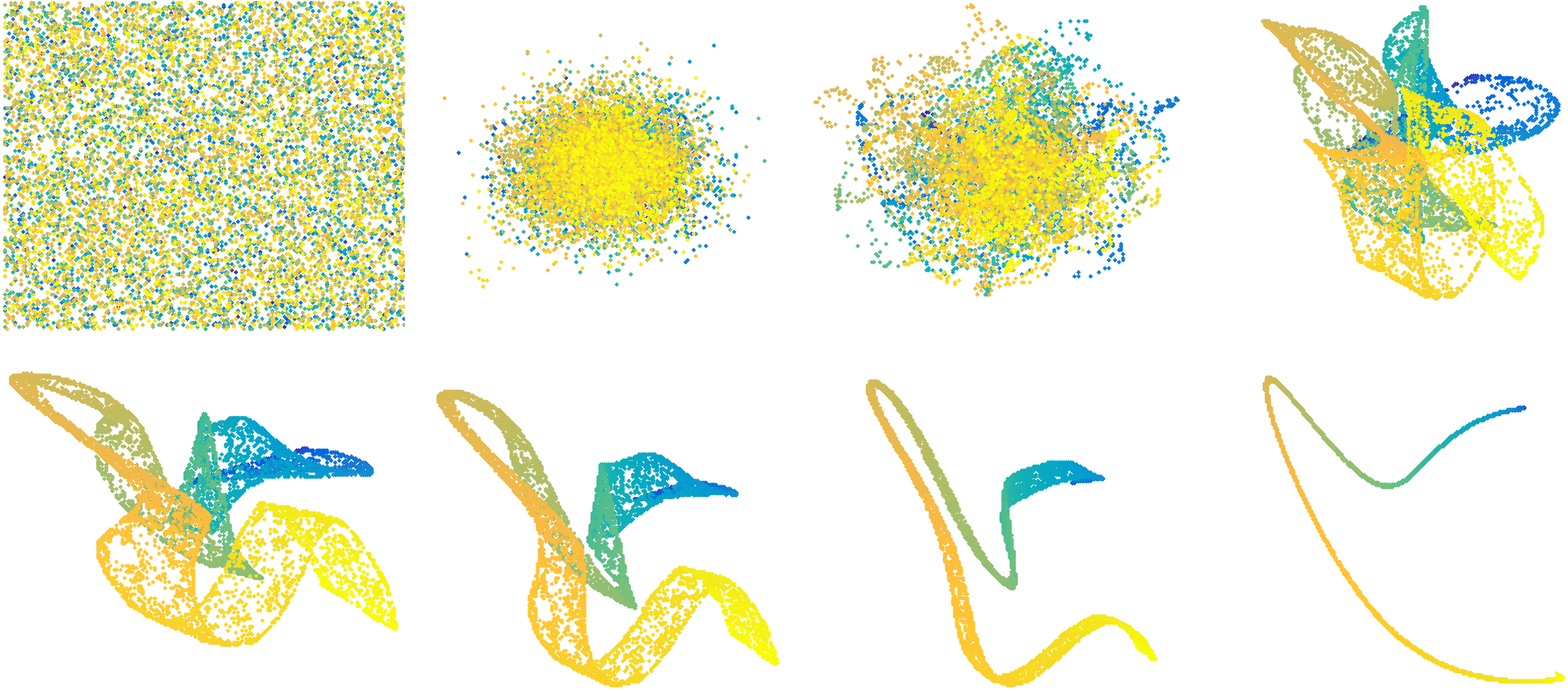}};
\node at (-5.6,3.5) {Initialization};
\node at (-1.8,3.5) {Step 1};
\node at (2,3.5) {Step 5};
\node at (5.5,3.5) {Step 75};
\node at (-5.6,-0.2) {Step 150};
\node at (-1.8,-0.2) {Step 200};
\node at (2.3,-0.2) {Step 300};
\node at (5.5,-0.2) {Step 900};
\end{tikzpicture}
\caption{Early exaggeration phase of t-SNE on a swiss roll with $\alpha = n, h = 1$.}
\label{fig:swiss}
\end{figure}
\end{center}
\vspace{-20pt}
The successful embedding of these examples when $\alpha$ and $h$ are chosen
optimally is consistent with our claim that in this regime, the early
exaggeration phase of t-SNE acts like a spectral method, many of which also
correctly embed these manifolds.

\subsection{Real-life data.}
Finally, we show the impact of the parameter selection $\alpha \sim n/10, h
\sim 1$ in a real-life example. Figure \ref{fig:mnist1} shows (left) classical out-of-the-box
t-SNE on 10000 randomly subsampled handwritten digits (0--5) from the MNIST dataset as well as the outcome of
the early exaggeration phase of t-SNE with parameters  $\alpha \sim n/10, h
\sim 1$ (middle) and the final outcome after the second phase of t-SNE has been
initialized with the data shown in the middle (right). We see that early exaggeration
does essentially all the clustering already and the second phase rearranges them.

\begin{center}
\begin{figure}[h!]
\begin{tikzpicture}
\node at(0,0) {\includegraphics[width=\textwidth]{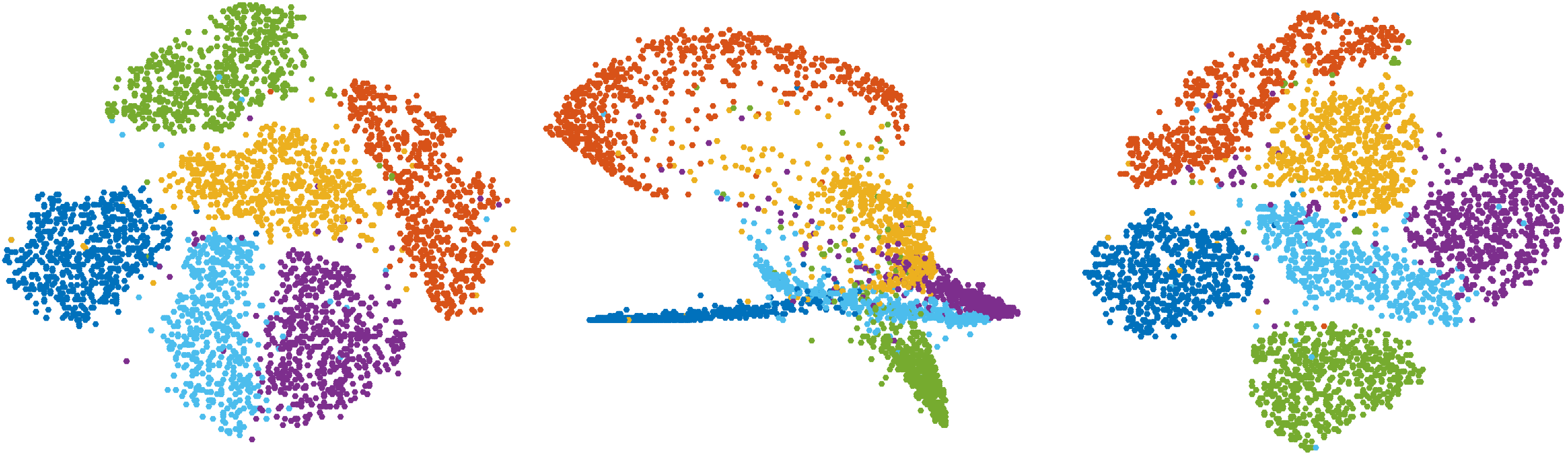}};
\node at (-5,3) {Classical t-SNE};
\node at (-0.5,3) {t-SNE with $\alpha = 0.1n,h = 1$};
\node at (-0.5,2.5) {after early exaggeration phase};
\node at (4.9,3) {t-SNE with $\alpha = 0.1n,h = 1$};
\node at (4.9,2.5) {final output};
\end{tikzpicture}
\caption{A real-life example: classical t-SNE (left), t-SNE with our proposed parameter selection both after
the early exaggeration phase (middle) and final output (right). }
\label{fig:mnist1}
\end{figure}
\end{center}

 We believe this example again hints at one of the fundamental questions that
 arises from the work in this paper: is the initial clustering done by standard
 t-SNE comparable to the initial clustering with the new parameter selection?
 If so, then the fact that the new parameter selection emulates a spectral
 clustering method (see \S \ref{sec:spectral}) certainly suggests the option of
 initializing with other clustering methods as opposed to random
 initialization. Moreover, it would hint at the danger of using a spectral
 clustering method and t-SNE as a dual verification of clustering.

\section{t-SNE: Notation and the Main result}\label{sec:tsne}

\subsection{t-SNE}
We denote the $d$-dimensional input dataset by $\mathcal{X} = \{x_1, 
x_2, ...,  x_n \} \subset \mathbb{R}^d$,  t-SNE computes an $s$-dimensional embedding of the
points in $\mathcal X$, denoted by $\mathcal Y = \{ y_1,  y_2, ...,  y_n
\} \subset \mathbb{R}^s$, where $s \ll d$ and most commonly $s= 2 \text{ or } 3$.  The joint
probability $p_{ij}$ measuring the similarity between $x_i$ and $x_j$ is
computed as:
$$p_{i|j} = \frac{\exp{(-\|  x_i -  x_j \|^2 /2 \sigma_i^2 )}}{\sum_{k\neq i} \exp{( - \|  x_i -  x_k \|^2/ 2 \sigma_i^2 }) } \qquad \mbox{and} \qquad \label{p_ij}
	p_{ij}  = \frac{ p_{i|j} + p_{j|i} }{2n}.$$
The bandwidth of the Gaussian kernel, $\sigma_i$, is often chosen such that the
perplexity of $P_i$ matches a user defined value, where $P_i$ is the
conditional distribution across all data points given $x_i$. We will never deal
with these issues: we will assume that the $p_{ij}$ are given and that they correspond
to a well-clustered set $\mathcal{X}$ (in a precise sense defined below). In particular,
we will not assume that they have been obtained using a Gaussian kernel.
The similarity between points $ y_i$ and $ y_j$ in the low
dimensional embedding is defined as:
$$q_{ij} = \frac{(1 + \| y_i -  y_j\|^2)^{-1}}{\sum_{k\neq l} (1 +\|  y_k -  y_l\|^2 )^{-1}}.$$
t-SNE finds the points $\{y_1, \dots, y_n \}$ which minimize the
Kullback-Leibler divergence between the joint distribution $P$ of points in the
input space and the joint distribution $Q$ of points in the embedding space:
$$C(\mathcal Y) = KL(P || Q) = \sum_{i\neq j} p_{ij} \log \frac{p_{ij}}{q_{ij}}.$$
The points $\mathcal Y$ are initialized randomly, and the cost function
$C(\mathcal Y)$ is minimized using gradient descent.  The gradient is derived
in Appendix A of \cite{maaten2008visualizing}:
$$\frac{\partial C}{\partial  y_i } = 4 \sum\limits_{j \neq i} (p_{ij} - q_{ij})q_{ij} Z ( y_i -  y_j),$$
where $Z$ is a global normalization constant
$$Z = \sum_{k\neq l}{ \left( 1 + \| y_k -
 y_l \|^2\right)^{-1}}.$$ 
  As in \cite{van2014accelerating}, we split the
gradient into two parts:
$$	\frac{1}{4} \frac{\partial C}{\partial  y_i } =  \sum_{j\neq i} {p_{ij} q_{ij} Z (  y_i -  y_j)} - \sum_{j\neq i}{ q_{ij}^2 Z ( y_i - y_j)} $$
where the first and second sums correspond to the sum of all attractive forces
and the sum of all repulsive forces, respectively. Early exaggeration introduces the coefficient $\alpha >1$ and 
corresponds to the gradient descent method
$$	\frac{1}{4} \frac{\partial C}{\partial  y_i } =  \sum_{j\neq i} { \alpha p_{ij} q_{ij} Z (  y_i -  y_j)} - \sum_{j\neq i}{ q_{ij}^2 Z ( y_i - y_j)} $$
and a small step-size $h > 0$ leads to the expression
$$	\frac{h}{4} \frac{\partial C}{\partial  y_i } =  h\sum_{j\neq i} {  \alpha   p_{ij} q_{ij} Z (  y_i -  y_j)} - h\sum_{j\neq i}{ q_{ij}^2 Z ( y_i - y_j)}.$$

\subsection{Main result}\label{subsec:main_result}
This section gives our main result. We emphasize that the method of proof is rather flexible and it is not difficult to obtain variations on the result under slightly
different assumptions. We emphasize that our result is formally stated for a set of points $\left\{x_1, \dots, x_n \right\}$ and a set of mutual affinities $p_{ij}.$
We will not assume that the $p_{ij}$ are obtained using the standard t-SNE normalizations but work at a full level of generality using a set of three assumptions.
We note, and explain below, that for standard t-SNE the second assumption holds until the number
of points exceeds, roughly, $n \sim 20000$ and the third assumption holds by design. The first assumption encapsulates our notion of clustered data.\\

\textit{1. $\mathcal{X}$ is clustered.} We proceed by giving a very versatile definition of what it means to be a cluster; it is trivially applicable to things that
clearly are not clusters, however, in those cases the error bound in the Theorem will not convey any information. Formally, we assume that there exists a 
$k \in \mathbb{N}$ (the number of clusters) and a map $\pi:\left\{1, \dots, n\right\} \rightarrow \left\{1,2, \dots, k \right\}$ assigning each point to one of the $k$
clusters such that the following property holds: if $\pi(x_i) = \pi(x_j),$ then
$$ p_{ij} \geq  \frac{1}{10 n | \pi^{-1}(\pi(i))| }.$$
Observe that $ |\pi^{-1}(\pi(i))|$ is merely the size of the cluster in which $i$ and $j$ lie.
We will furthermore abbreviate, for fixed $1 \leq i \leq n$, summations over
clusters as
$$  \sum_{j \neq i \atop \mbox{\tiny same cluster}} :=  \sum_{j \neq i \atop \pi(j) = \pi(i)} \qquad \mbox{and} \qquad 
  \sum_{j \neq i \atop \mbox{\tiny other clusters}} :=  \sum_{j \neq i \atop \pi(j) \neq \pi(i)}$$

\textit{2. Parameter choice.} We assume that $\alpha$ and $h$ are chosen such that, for some $1 \leq i \leq n$
 $$  \frac{1}{100} \leq     \alpha h \sum_{j \neq i \atop \mbox{\tiny same cluster}}{p_{ij}} \leq \frac{9}{10}.$$

The main result will be applicable to single cluster (i.e. it is possible to guarantee that a single cluster converges even if
the rest does not) and it can be applied to exactly those clusters satisfying this inequality.
It is easy to see, both in the proof and in numerical experiments, that the upper bound is a necessary condition for the early exaggeration phase of
t-SNE to work (more precisely, the upper bound 1 is necessary but we need a little bit of leeway in another part of the argument).  We observe that condition (1) implies that $\alpha \sim n/10$ and $h \sim 1$ is admissible, however, other parameter choices
(i.e. $\alpha \sim 10n, h \sim 1/100$) are equally valid. In particular, for a small number of points (roughly $n \lesssim 24000$), the standard t-SNE parameter selection $\alpha \sim 12,~h\sim 200$
does satisfy these bounds. If the number of points gets larger, the lower bound is violated: our main result can be easily extended to cover that case,
however, the factor $\kappa$ with which exponential convergence occurs approaches 1 and convergence, while technically exponential, slows down.
In particular, an analysis of how this condition acts in the proof motivates an accurate parameter selection rule.

\begin{quote}
\textit{Guideline.} The best convergence rate for the cluster containing $y_i$ is attained when
 $$\alpha h = \frac{9}{10} \left( \sum_{j \neq i \atop \mbox{\tiny same cluster}}{p_{ij}}\right)^{-1} \quad \mbox{while} \quad  \alpha h  =\frac{9}{10} \left( \max_{1 \leq i \leq n}{ \sum_{j \neq i \atop \mbox{\tiny same cluster}}{p_{ij}}}\right)^{-1}$$
is the best selection to ensure that all clusters converge.
\end{quote}

 \textit{3. Localized initialization.} The initialization satisfies $\mathcal{Y} \subset [-0.01,0.01]^2$. This assumption is not crucial and could be easily modified at
the cost of changing some other constants. The proof suggests that initializing at smaller scales might be 
beneficial on the level of constants.

\begin{thm} The diameter of the embedded cluster $ \left\{y_j: 1 \leq j \leq n \wedge \pi(j) = \pi(i) \right\}$
decays exponentially (at universal rate) until its diameter satisfies, for some universal constant $c>0$,
$$ \diam  \left\{y_j: 1 \leq j \leq n \wedge \pi(j) = \pi(i) \right\}  \leq   c \cdot h \left( \alpha  \sum_{j \neq i \atop \mbox{\tiny other clusters}}{ p_{ij} } + \frac{1}{n}\right).$$
\end{thm}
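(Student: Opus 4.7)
The plan is to establish a one-step recursion
$$D(t+1) \leq \kappa\, D(t) + \varepsilon$$
for the cluster diameter $D(t) := \diam\{y_j(t) : \pi(j) = \pi(i)\}$, with contraction factor $\kappa < 1$ bounded away from one by Assumption 2 and additive error $\varepsilon$ matching the right-hand side of the theorem; geometric iteration of this recursion then yields exponential decay of $D$ to a ball of radius $\varepsilon/(1-\kappa)$, which is the claimed conclusion.

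I would begin by rewriting the early-exaggeration gradient step as a linear iteration
$$y_i(t+1) = \sum_j M_{ij}(t)\,y_j(t), \qquad M_{ij} = h(\alpha p_{ij}-q_{ij})q_{ij}Z \quad (i\neq j), \qquad M_{ii} = 1 - \sum_{j\neq i} M_{ij},$$
so that each row of $M$ sums to one by construction. Because the initialization lies in $[-0.01,0.01]^2$ and attractive forces will dominate, an inductive invariant should keep every pairwise distance so small that $q_{ij}Z = (1+\|y_i-y_j\|^2)^{-1}$ stays close to $1$ and $Z$ stays close to $n(n-1)$; this linearizes the dynamics and is precisely where the spectral-clustering picture of \S\ref{sec:spectral} enters. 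For a pair $i,k$ with $\pi(i)=\pi(k)$, I would then split
$$y_i(t+1) - y_k(t+1) = \sum_{\pi(j)=\pi(i)}(M_{ij}-M_{kj})y_j \;+\; \sum_{\pi(j)\neq\pi(i)}(M_{ij}-M_{kj})y_j,$$
treating the first sum as a near-convex combination of same-cluster points and the second as a perturbation.

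For the same-cluster part I would apply the abstract dynamical-systems contraction estimate of \S\ref{sec:dynamical}. Assumption 2 places the total same-cluster off-diagonal mass $\sum_{j\neq i,\,\pi(j)=\pi(i)}M_{ij} \approx h\alpha\sum_{\text{same cluster}} p_{ij}$ in the interval $[1/100,9/10]$; and Assumption 1 guarantees that each same-cluster $M_{ij}$ is positive, since $\alpha p_{ij} \gtrsim 1/(100|\pi^{-1}(\pi(i))|) \gg q_{ij}\approx 1/n^2$, so that the $\alpha p_{ij}q_{ij}Z$ attraction strictly dominates the $q_{ij}^2 Z$ repulsion between points of the same cluster. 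Together these facts yield a contraction factor $\kappa \sim 1 - c\cdot h\alpha\sum_{\text{same cluster}} p_{ij}$, bounded away from $1$ exactly by the lower bound of Assumption 2. The inter-cluster sum and the residual repulsion produce the additive error: the inter-cluster attractive pull on $y_i$ is bounded by $h\alpha\bigl(\sum_{\text{other}} p_{ij}\bigr)\diam(\mathcal{Y})$, while the total repulsive force is bounded by $\bigl(\sum_j q_{ij}^2 Z\bigr)\diam(\mathcal{Y}) \lesssim h/n$, using $\sum_{j\neq i}(1+\|y_i-y_j\|^2)^{-1}\leq n-1$ and $Z\approx n(n-1)$; this gives the stated $\varepsilon \lesssim h\bigl(\alpha\sum_{\text{other}}p_{ij}+1/n\bigr)$.

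I expect the main obstacle to be self-consistent global control. One must verify that, while individual clusters contract, the global embedding does not spread in a way that breaks the approximations $q_{ij}Z\approx 1$ and $Z\approx n^2$, and that the matrix entries $M_{ij}$ remain uniformly well-behaved along the entire trajectory. The technical heart of the proof will presumably lie in applying the abstract estimate of \S\ref{sec:dynamical} robustly enough to absorb the cross-cluster perturbations without losing the contraction, as well as in maintaining the inductive invariant that sustains the linearization in the first place.
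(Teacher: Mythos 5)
Your decomposition is the same as the paper's (same-cluster attraction as the driving term, cross-cluster attraction plus repulsion as an $O\bigl(h(\alpha\sum_{\text{other}}p_{ij}+1/n)\bigr)$ error, contraction supplied by the abstract lemmas of \S\ref{sec:dynamical}), and your bookkeeping of $\kappa$ and $\varepsilon$ is consistent with the theorem. But the step you defer as ``the main obstacle'' -- keeping the whole embedding small enough that $q_{ij}Z\geq 9/10$ and the error bound remain valid -- is precisely the crux of the paper's proof, and your proposed fix (a global inductive invariant sustaining the linearization along the entire trajectory) does not close. The one-step estimates available here allow every point to drift by $\sim\varepsilon$ per step indefinitely (and, with repulsion and several clusters present, the embedding does eventually spread), so an invariant of the form ``all points stay in a fixed small box for all time'' cannot be propagated. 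The paper instead makes a finite-time argument: Lemma 1 (stability of the convex hull) shows the hull of \emph{all} points grows by at most $\varepsilon$ per step, hence starting from $[-0.01,0.01]^2$ every point remains in $[-0.02,0.02]^2$ for at least $\ell\gtrsim 0.01/\varepsilon$ iterations; and the contraction of Lemma 2 is fast enough that within this window the cluster diameter already reaches the $O(\varepsilon)$ floor, via the elementary inequality $(1-\tfrac{1}{20})^{1/(100\varepsilon)}\leq 8\varepsilon$. Without this (or an equivalent) race between hull drift and contraction, your recursion $D(t+1)\leq\kappa D(t)+\varepsilon$ cannot legitimately be iterated, because its constants are only valid while the global localization holds.

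Two smaller points. First, you attribute the contraction factor to the aggregate lower bound of Assumption 2; positivity of the $M_{ij}$ plus a lower bound on the total same-cluster mass is not enough for diameter contraction (mass could concentrate on a single neighbor), and what Lemma 2 of \S\ref{sec:dynamical} actually uses is the \emph{pointwise} lower bound of Assumption 1, which gives $\delta\gtrsim \alpha h/(n|\pi^{-1}(\pi(i))|)$ for every same-cluster pair; Assumption 2's lower bound only ensures $\alpha h\sim n$ so that this $\delta$ yields a rate bounded away from $1$. Second, folding the same-cluster repulsion into $M_{ij}$ (rather than into the error, as the paper does) is harmless but forces you to re-verify the hypotheses of the abstract lemma for the perturbed coefficients; the paper avoids this by keeping all $q_{ij}^2Z$ terms in the error, which is already of the stated size.
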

\textit{Remarks.} 
\begin{enumerate}
\item The Theorem can be applied to a single cluster; in particular, some clusters may contract to tiny balls while others do not contract at all.
\item Since $\alpha h \sim n$, we see that the bound is only nontrivial if, for some small constant $c_2>0$, 
$$ \sum_{j \neq i \atop \mbox{\tiny other clusters}}{ p_{ij} } \leq \frac{c_2}{n}.$$
Otherwise, it merely tells us that the elements of the clusters are contained in a ball of radius $\sim 1$ (as are all the other points).
Generally, for well-clustered data, we would expect that sum to be very close to 0 which would yield a leading term error of $c h/n$.
\item The constant $c$ seems to be roughly on scale $c \sim 10$ for well-clustered data and slightly
larger for data with worse clustering properties (in particular, for the classical t-SNE parameter section, it would slowly increase with 
the number of points $n$). We believe this estimate to conservative and consider the true value to be on a smaller order of
magnitude; this question will be pursued in future work.
\end{enumerate}

The proof of the main result is actually rather versatile and should easily adapt to a variety of other settings that might be of interest.
This versatility is partly due to the connection of the argument to rather fundamental ideas in partial differential equations, indeed, the
argument may be interpreted as a maximum principle for a discrete parabolic operator acting on vector-valued (i.e. points in space)
data. This interpretation is what led us to establish a connection to spectral clustering which we now discuss.

\section{A Connection to Spectral Clustering}\label{sec:spectral}
\subsection{Approximating spectral clustering}\label{subsec:spectral}
The purpose of this section is to note that it is possible to take the limit
$\alpha \rightarrow \infty, ~h \rightarrow 0$ (scaled so that $\alpha \cdot h =
\mbox{const}$) and that, in that limit, one obtains a simple spectral
clustering method.  We re-introduce notation and assume again that $\mathcal{X}
= \left\{x_1, x_2, \dots, x_n \right\} \subset \mathbb{R}^d$ is given. We assume
$p_{ij}$ is some collection of affinities scaled in such a way that for $x_i,
x_j$ in the
same cluster $\pi(i) = \pi(j)$
$$ p_{ij} \geq \frac{1}{10} \frac{1}{|\pi^{-1}(\pi(i))|} \qquad \mbox{and} \qquad \sum_{j = 1}^{n}{p_{ij}} \leq 1.$$
We observe that this scaling is slightly differently than the one above: it is obtained by absorbing the $\alpha h \sim n$ term
into the affinities. At the same time, $h \rightarrow 0$ implies that the repulsion term containing the $q_{ij}$ does not exert any force.
This implies that, in the limit, the remaining term in the gradient descent method is given by
\begin{align*}
y_i(t+1) &= y_i(t) + \sum_{j \neq i}{p_{ij} (y_j(t) - y_i(t))} \\
&= \sum_{j \neq i}{p_{ij} y_j(t)} + \left(1 - \sum_{j \neq i}{p_{ij}}\right)y_i(t).
\end{align*}
This, however, can be interpreted as a Markov chain with suitably chosen transition probabilities. It may be unusual, at first, to see this
equation since the $y_i(t)$ are vectors in $\mathbb{R}^2$, however, all the equations separate different coordinates, which allows for
a reduction to the familiar form. All the canonical results from spectral clustering apply: the asymptotic behavior is given by the largest 
non-trivial eigenvalue(s), which are either 1 (in the case of perfectly separated clusters) or very close to 1 and convergence speed depends
on the spectral gap. 

\subsection{Visualizing spectral clustering}\label{sec:visualizing_spec_clust} The connection also allows us to go the other direction and discuss a particular visualization technique
for spectral methods that shows arising clusters as points in $\mathbb{R}^2$ (or higher dimensions, which is not essential here). The transition
matrix of the Markov chain is given by
$$ A_{ij} = \begin{cases}1 - \sum_{i \neq k}{p_{ik}}   \qquad &\mbox{if}~i = j\\
p_{ji} \qquad &\mbox{otherwise.}
\end{cases}$$
The large-time behavior of $y(t) = A^t y(0)$ is essentially determined by the spectrum of $A$ close to 1. 
Moreover, in the case of perfect clustering with $p_{ij} = 0$ whenever $x_i$ and $x_j$ are in different clusters, there are exactly $k$ eigenvalues
equal to 1 and the initialization converges to that. 

\begin{center}
\begin{figure}[h!]
\begin{tikzpicture}[scale=1]
\node at(0,1) {\includegraphics[width=\textwidth]{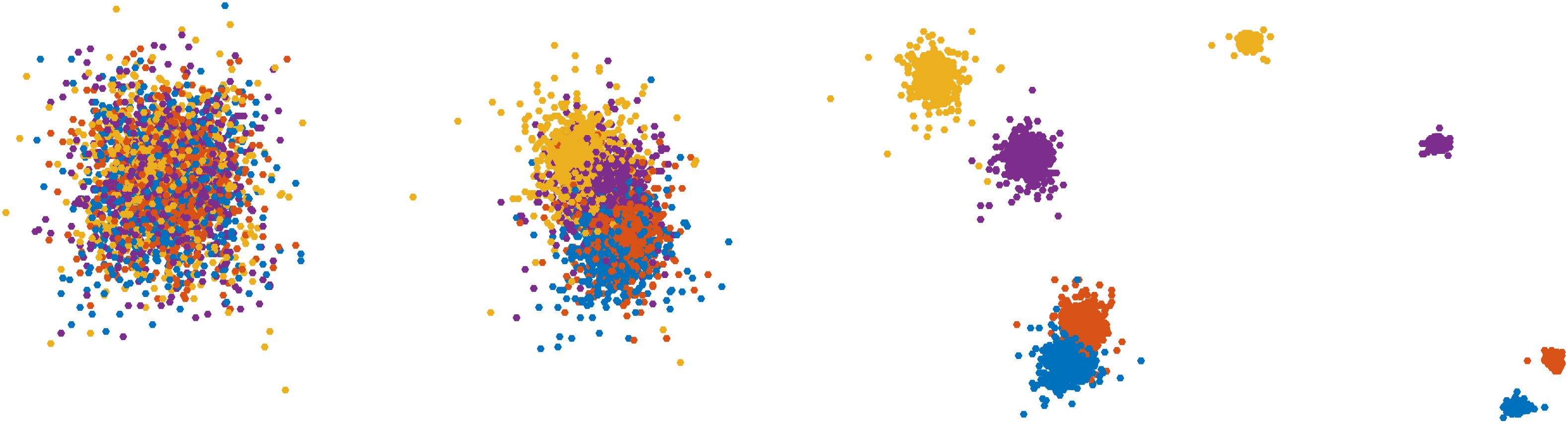}};
\node at (-5.8,3.2) {Step 50};
\node at (-1.8,3.2) {Step 100};
\node at (2,3.2) {Step 150};
\node at (5.5,3.2) {Step 200};
\node at(0,-3.5) {\includegraphics[width=\textwidth]{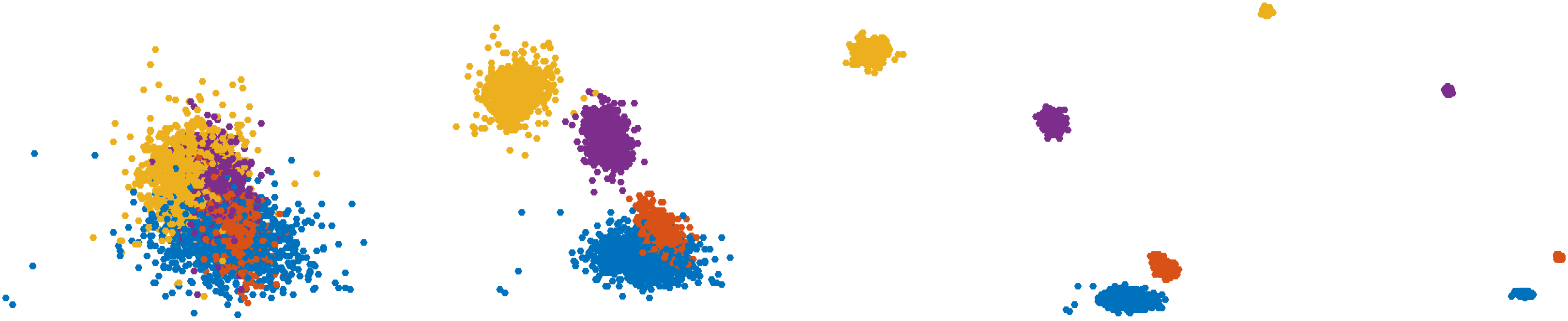}};
\node at (-5.6,-1.6) {Step 50};
\node at (-1.8,-1.6) {Step 100};
\node at (2.3,-1.6) {Step 150};
\node at (5.5,-1.6) {Step 200};
\end{tikzpicture}
\caption{Early exaggeration via t-SNE with $\alpha = n/10, h = 1$ (top) and visualization of iterations of the spectral method (bottom) on same initialization.}
\label{fig:spectral}
\end{figure}
\end{center}
Let us now assume that the goal is visualization in $\mathbb{R}^2$. We let
$\mathcal{Y} = \left\{y_1(0), y_2(0), \dots, y_n(0)\right\} \subset
\mathbb{R}^2$ be a set of points that we assume are i.i.d. random variables
from, say, the uniform distribution on $[-0.01,0.01]^2$. 
We propose to visualize the point set after $k$ iterations as follows: collect these $n$ initial vectors in a $n \times 2$ vector $\underline{y}$
and interpret the $n$ rows of $A^k \underline{y}$ as coordinates in $\mathbb{R}^2$. This creates t-SNE-style visualizations for spectral methods (see Fig. \ref{fig:spectral}
and Fig. \ref{fig:mnist}, lower rows). \\

\textit{Examples.} An example of this method is shown in Figure \ref{fig:spectral}. The example is comprised of 40000 points in $\mathbb{R}^{25}$ sampled
from four very narrow Gaussians and are highly clustered. We used perplexity of 30 to create the $p_{ij}$ and used $\alpha = n/10, h=1$
in the implementation of t-SNE. The second row in Figure \ref{fig:spectral} shows the projection onto the 50 largest eigenvectors of $A$.
The computation time of t-SNE took roughly 7 minutes vs. 1 minute for the spectral decomposition -- note, however, that once
the spectral decomposition has been computed, then iterations can be computed in constant time (one only has to raise
the eigenvalues to some power).

\begin{center}
\begin{figure}[h!]
\begin{tikzpicture}[scale=1]
\node at(0,1) {\includegraphics[width=\textwidth]{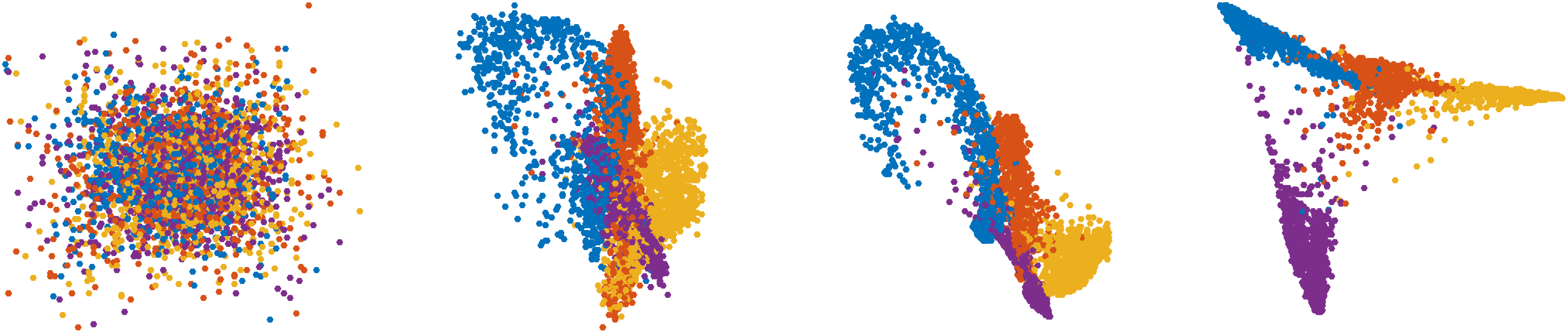}};
\node at (-5.8,3.2) {Step 5};
\node at (-1.8,3.2) {Step 50};
\node at (2,3.2) {Step 100};
\node at (5.5,3.2) {Step 800};
\node at(0,-3.5) {\includegraphics[width=\textwidth]{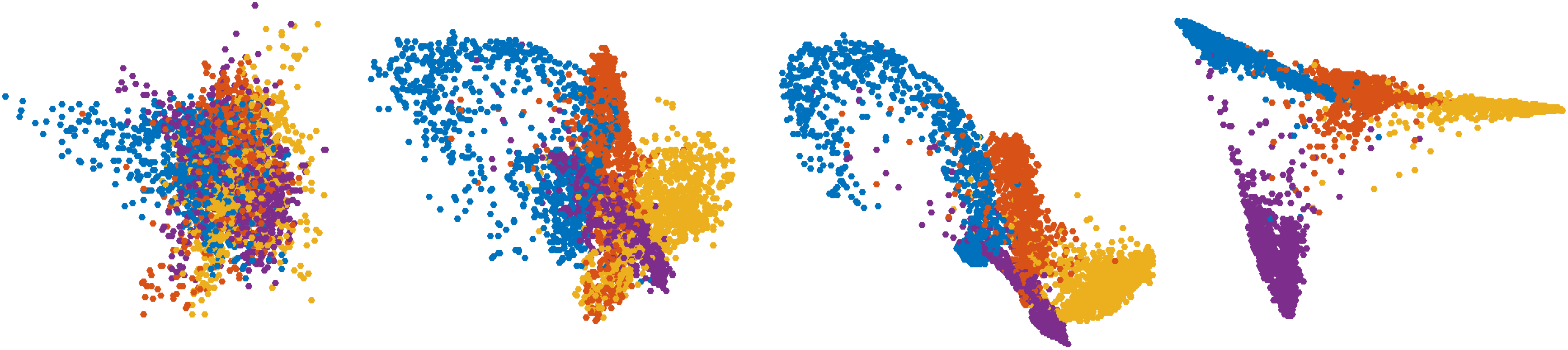}};
\node at (-5.6,-1.6) {Step 5};
\node at (-1.8,-1.6) {Step 50};
\node at (2.3,-1.6) {Step 100};
\node at (5.5,-1.6) {Step 800};
\end{tikzpicture}
\caption{Early exaggeration via t-SNE with $\alpha \sim n/3, h = 1$ (top, parameter selection via guideline) and iterations of the spectral method (bottom).}
\label{fig:mnist}
\end{figure}
\end{center}
\vspace{-20pt}
Another example is given in Figure \ref{fig:mnist} that is run on 4 digits in MNIST; again, both methods coincide.
This shows that our derivation of the approximating spectral method was 
accurate. At the same time, it suggests to repeat the fundamental question.
\begin{quote}
\textbf{Open problem.} Is the clustering behavior of the early exaggeration phase of t-SNE with $\alpha = 12, h = 200$ (and possibly optimization techniques
such as momentum) essentially qualitatively
equivalent to the behavior of t-SNE with our parameter choice $\alpha = n/10, h = 1$?
\end{quote}
If this were indeed the case, then the early exaggeration phase of t-SNE would be simply a spectral clustering method in disguise;
if not, then it would be very valuable to understand under which circumstances its performance is superior to spectral clustering
and whether its underlying mechanisms could be used to boost spectral methods. We reiterate that we believe this to
be a very interesting problem.

\section{Ingredients for the Proof: Discretized Dynamical Systems}\label{sec:dynamical}
This section introduces a type of discrete dynamical systems on sets of points in $\mathbb{R}^s$ and we describe 
their asymptotic behavior; this is a self-contained result; it could potentially be interpreted as an analysis of a spectral method
that is robust to small error terms but the analysis is simple enough for us to keep entirely self-contained. Our original guiding
picture was that of the maximum principle in the theory of parabolic partial differential equations.

\subsection{A discrete dynamical system}
\label{section:system}
Let $z_1, \dots, z_n \in \mathbb{R}^s$ be given. We use them as initial values for a time-discrete dynamical system that is defined via
\begin{align*}
z_i(t+1) &= z_i(t) + \sum_{j =1}^{n}{\alpha_{i,j,t}(z_j(t) - z_i(t))} + \varepsilon_i(t)\\
z_i(0) &= z_i
\end{align*}
At this stage, if the points are in general position and $n \geq s$, basic linear algebra implies that this system can undergo almost any arbitrary
evolution as long as one is free to choose $\alpha_{i,j,t}$. We will henceforth assume that these parameters assume the
following three conditions.
\begin{enumerate}
\item There is a uniform lower bound on the coefficients for all $t>0$ and all $i \neq j$
$$ |\alpha_{i,j,t}| \geq \delta > 0.$$
\item There is a uniform upper bound on the coefficients
$$ \sum_{j =1}^{n}{\alpha_{i,j,t}} \leq  1.$$
\item There is a uniform upper bound on the error term
$$ \| \varepsilon_i(t) \| \leq \varepsilon.$$
\end{enumerate}

A typical example of such a dynamical system is given in the Figure below: we start with twelve points on the unit circle and then iterate the system 
for some random choices of $a_{i,j,t}$ and random $\varepsilon_i(t)$. The points move at first
towards each other until they are close and the error term starts being on the same scale as the forces of attraction. The points then move around
randomly (all the while staying close to each other). We will make this intuitive picture precise below.
\begin{center}
\begin{figure}[h!]
\includegraphics[width=0.45\textwidth]{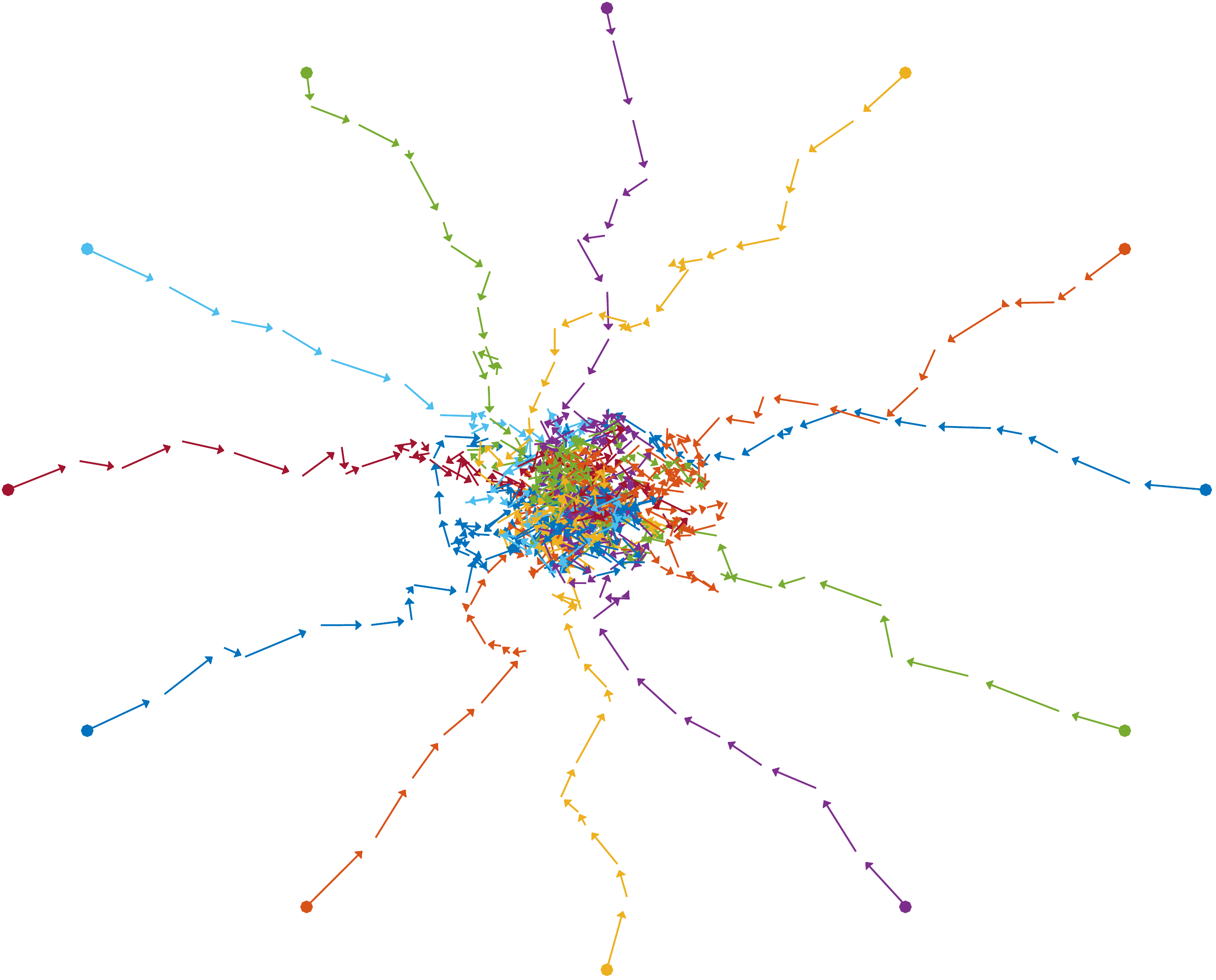}
\label{fig:spider}
\caption{A typical evolution a dynamical system of this type.}
\end{figure}
\end{center}
\vspace{-10pt}
The main result of this section is that all the points in this dynamical system are eventually contained
in a ball whose size only depends on $n, \delta$ and $\varepsilon$.
We start by showing that the convex hull of the points is stable. We use $B(0,\varepsilon)$ to denote 
a ball of radius $\varepsilon$, $A+B = \left\{a + b: a \in A \wedge b \in B \right\}$ and $\conv{A}$ 
for the convex hull of $A$.

\begin{lemma}[Stability of the convex hull] With the assumptions above, we have
$$ \conv  \left\{z_1(t+1), z_2(t+1), \dots, z_n(t+1) \right\}  \subseteq  \conv  \left\{z_1(t), z_2(t), \dots, z_n(t) \right\}  + B(0, \varepsilon),$$
\end{lemma}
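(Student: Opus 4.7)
The plan is to show that each updated point $z_i(t+1)$ can be written as a convex combination of the old points $z_1(t),\dots,z_n(t)$, perturbed by the error $\varepsilon_i(t)$. Once this is done, the lemma follows by noting that the Minkowski sum of a convex set with a ball is convex, so the convex hull of the new points must sit inside it.

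First I would rewrite the update rule in the form
$$z_i(t+1) = \Bigl(1 - \sum_{j\neq i}\alpha_{i,j,t}\Bigr) z_i(t) + \sum_{j\neq i}\alpha_{i,j,t} \, z_j(t) + \varepsilon_i(t),$$
which is the natural reorganization of $z_i(t) + \sum_j \alpha_{i,j,t}(z_j(t)-z_i(t))$ once one absorbs the $j=i$ term (for which the summand vanishes). Then I would verify that the resulting coefficients $\lambda_{i,j}$ form a genuine convex combination: by assumption (2) the diagonal weight $1-\sum_{j\neq i}\alpha_{i,j,t}$ is non-negative, and by assumption (1) (applied with the positive sign that is implicit in the t-SNE application, since these $\alpha_{i,j,t}$ encode attractive-force coefficients) the off-diagonal weights are also non-negative; finally, $\sum_j \lambda_{i,j} = 1$ by construction. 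Consequently $z_i(t+1) - \varepsilon_i(t) \in \conv\{z_1(t),\dots,z_n(t)\}$.

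From there the conclusion is immediate. Since $\|\varepsilon_i(t)\|\leq \varepsilon$, each $z_i(t+1)$ lies in $\conv\{z_1(t),\dots,z_n(t)\} + B(0,\varepsilon)$. This set is convex, as the Minkowski sum of two convex sets is convex, so the convex hull of the finite collection $\{z_1(t+1),\dots,z_n(t+1)\}$ is also contained in it, which is exactly the claimed inclusion.

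I do not expect a real obstacle here; the content of the lemma is essentially the observation that the first two conditions on $\alpha_{i,j,t}$ are precisely what is needed to turn the update into a (stochastic-matrix-style) convex combination. The only point requiring care is the sign of $\alpha_{i,j,t}$: condition (1) is stated as a lower bound on the absolute value, but in the dynamical system of interest the coefficients come from attractive forces and are non-negative, and it is this positivity (together with the upper bound from condition (2)) that makes the weights $\lambda_{i,j}$ a probability vector. Nothing beyond this bookkeeping is required; later lemmas in the section will presumably combine this stability statement with the lower bound $\delta$ and the error bound $\varepsilon$ to actually contract the diameter of the convex hull.
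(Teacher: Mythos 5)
Your proposal is correct and follows essentially the same route as the paper: rewrite the update as a convex combination of the old points (using that the coefficients are nonnegative and sum to at most one) plus the error term, so $z_i(t+1)-\varepsilon_i(t)$ lies in the old convex hull. Your remark about the sign of $\alpha_{i,j,t}$ is a fair point of care that the paper itself glosses over by simply asserting $0 \leq \sum_j \alpha_{i,j,t} \leq 1$; the intended positivity of the attractive coefficients is indeed what makes the argument work.
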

\begin{proof} This argument is simple. We note that
\begin{align*}
z_i(t+1) &= z_i(t) + \sum_{j =1}^{n}{\alpha_{i,j,t}(z_j(t) - z_i(t))} + \varepsilon_i(t) \\
&= \left(1 - \sum_{j=1 \atop j \neq i}^{n}{ \alpha_{i,j,t}} \right) z_i (t) +  \sum_{j =1 \atop j \neq i}^{n}{\alpha_{i,j,t} z_j(t)} + \varepsilon_i(t).
\end{align*}
By assumption,
$$  0 \leq \sum_{j =1}^{n}{\alpha_{i,j,t}} \leq 1$$
and this implies $z_i(t+1) - \varepsilon_i(t) \in  \conv  \left\{z_1(t), z_2(t), \dots, z_n(t) \right\}$.
\end{proof}

\begin{lemma}[Contraction inequality] With the notation above, if the diameter is large
$$ \diam \left\{z_1(t), z_2(t), \dots, z_n(t) \right\} \geq \frac{10\varepsilon}{n \delta},$$
then
$$ \diam \left\{z_1(t+1), z_2(t+1), \dots, z_n(t+1) \right\} \leq \left(1 - \frac{n \delta}{20} \right) \diam \left\{z_1(t), z_2(t), \dots, z_n(t) \right\}.$$
\end{lemma}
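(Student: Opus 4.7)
The approach is to exploit the form of the iteration as a convex combination plus a small additive error, and then bound the pairwise distances $\|z_i(t+1) - z_k(t+1)\|$ via a coupling argument on the associated stochastic matrices. First I would rewrite
\[
z_i(t+1) = \sum_{j=1}^n \beta_{ij} z_j(t) + \varepsilon_i(t), \qquad \beta_{ij} = \alpha_{i,j,t}\ (j\neq i), \quad \beta_{ii} = 1 - \sum_{j\neq i} \alpha_{i,j,t},
\]
exactly as in the previous lemma. Under the standing assumptions (taking $\alpha_{i,j,t}\ge 0$, as is implicit in the convex hull lemma), each row $(\beta_{ij})_{j=1}^n$ is a probability vector.

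Next, I would fix indices $i,k$ and compare $z_i(t+1)$ and $z_k(t+1)$ using a transport/coupling identity: if $(\beta_{ij})_j$ and $(\beta_{kj})_j$ are two probability vectors, let $m_j := \min(\beta_{ij},\beta_{kj})$ and $M := \sum_j m_j$. Writing
\[
\sum_j \beta_{ij} z_j(t) - \sum_j \beta_{kj} z_j(t) = (1-M)\Bigl[\sum_j \tfrac{\beta_{ij}-m_j}{1-M} z_j(t) - \sum_j \tfrac{\beta_{kj}-m_j}{1-M} z_j(t)\Bigr]
\]
expresses the difference as $(1-M)$ times the difference of two convex combinations of the $z_j(t)$, whose norm is trivially at most $D := \diam\{z_1(t),\dots,z_n(t)\}$. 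Hence
\[
\|z_i(t+1) - z_k(t+1)\| \;\leq\; (1-M)\, D + 2\varepsilon.
\]

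The crux is lower-bounding $M$. For every index $j \notin \{i,k\}$ the hypothesis $\alpha_{i,j,t},\alpha_{k,j,t}\ge \delta$ yields $m_j \ge \delta$, so $M \ge (n-2)\delta$. Combining with the displayed inequality,
\[
\|z_i(t+1) - z_k(t+1)\| \;\leq\; \bigl(1-(n-2)\delta\bigr)\, D + 2\varepsilon.
\]

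Finally I would absorb the error term into the contraction factor using the standing hypothesis $D \ge 10\varepsilon/(n\delta)$, which gives $2\varepsilon \le \tfrac{n\delta}{5} D$. A short arithmetic check then shows $\bigl(1-(n-2)\delta\bigr)D + 2\varepsilon \le (1 - \tfrac{n\delta}{20})D$ for all $n \ge 3$, and taking the maximum over $i,k$ yields the bound on $\diam\{z_1(t+1),\dots,z_n(t+1)\}$.

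The main obstacle I anticipate is the coupling step: one needs the clean identity that expresses the difference of two convex combinations of a fixed point set as a scaled difference of two other convex combinations of the same set. Once that identity is in place, the uniform lower bound $\alpha_{i,j,t}\ge \delta$ does the heavy lifting and the rest is bookkeeping with the constants $10$ and $20$; these constants are not sharp but are tuned precisely so that the hypothesis on $D$ absorbs the $2\varepsilon$ term with room to spare.
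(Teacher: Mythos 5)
Your proposal is correct and takes a genuinely different route from the paper. You prove the contraction pairwise via the standard Dobrushin/coupling estimate: writing each update as a row-stochastic average plus error, you bound $\bigl\|\sum_j \beta_{ij}z_j(t)-\sum_j\beta_{kj}z_j(t)\bigr\|\le (1-M)\,D$ with $M=\sum_j\min(\beta_{ij},\beta_{kj})\ge (n-2)\delta$, and then absorb the $2\varepsilon$ using the hypothesis $D\ge 10\varepsilon/(n\delta)$; the arithmetic indeed closes for $n\ge 3$, since $(n-2)\delta-\tfrac{n\delta}{5}\ge \tfrac{n\delta}{20}$ is equivalent to $n\ge 8/3$. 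The paper instead projects onto an arbitrary coordinate direction, splits the projected range $[0,\diam]$ into two halves, uses a pigeonhole count (at least $n/2$ points in one half, each contributing a coefficient $\ge\delta$) to show every projected point lands in $[0,(1-\tfrac{n\delta}{4})\diam]$, and then adds $2\varepsilon$ and absorbs it the same way you do. Your argument is dimension-free without any projection, actually yields the sharper factor $1-(n-2)\delta$ before weakening to $1-\tfrac{n\delta}{20}$, and fits naturally with the Markov-chain/spectral interpretation the paper develops later; the paper's projection-and-halving argument is tailored to the weakened hypothesis discussed in its closing remark (only a $(\tfrac12+\varepsilon)$-fraction of coefficients per row bounded below), though your overlap bound also adapts to that setting with $M\gtrsim \varepsilon n\delta$. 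Two small caveats you should make explicit: the nonnegativity and $\beta_{ii}\ge 0$ you use (which, as you note, are already implicit in the paper's convex hull lemma), and the restriction $n\ge 3$ — the lemma as stated has no such restriction, but for $n=2$ the claim degenerates (two points with $\alpha_{1,2,t}=\alpha_{2,1,t}=1$ simply swap), so this is a defect of the statement rather than of your proof, and the paper's own pigeonhole step has the same small-$n$ slack.
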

One particularly important consequence is the following:  the diameter shrinks, at an exponential rate $\left(1 - n \delta /20 \right)^t$, to a size of $\sim \varepsilon/(n\delta)$. Of course, this
convergence is particularly fast whenever $n \delta \sim 1$. It is easy to see,
for example by taking $n=2$ points in $\mathbb{R}$, that this is the optimal scale for the result to hold.

\begin{proof}[Proof of Lemma 2] 
The method of proof will be as follows: we will project the set of points onto an arbitrary line (say, the $x-$axis by taking only the first coordinate of each point) and show that the one-dimensional
projections contract exponentially quickly. This then implies the desired statement. Let $\pi_x:\mathbb{R}^n \rightarrow \mathbb{R}$ be such a projection. We abbreviate
the diameter of the projection as 
$$ \diam :=  \diam \left\{\pi_x z_1(t), \pi_x z_2(t), \dots, \pi_x z_n(t) \right\}.$$
We may assume w.l.o.g. that this set is contained in $\left\{\pi_x z_1(t), \pi_x z_2(t), \dots, \pi_x z_n(t) \right\} \subset [0,\diam]$. We then subdivide the
interval into two regions
$$ I_1 = \left[0, \frac{\diam}{2}\right] \quad \mbox{and} \quad I_2 = \left(\frac{\diam}{2}, \diam \right]$$
and denote the number of points in each interval by $i_1, i_2$. Clearly, $i_1 + i_2 = n$ and therefore either $i_1 \geq n/2$ or $i_2 \geq n/2$. We assume w.l.o.g.
the first case holds. 
Projections are linear, thus
$$
 \pi_x z_i(t+1) = \pi_x z_i(t) + \sum_{j=1}^{n}{ a_{i,j,t} \pi_x (z_j(t) - z_i(t)) } + \pi_x \varepsilon_i(t).
$$
We abbreviate 
$$ 0 \leq \sigma := \sum_{j =1}^{n}{\alpha_{i,j,t}} \leq 1$$
and write
\begin{align*}
 \sum_{j=1}^{n}{ a_{i,j,t} \pi_x (z_j(t) - z_i(t)) } &=  \sum_{\pi_x z_j \leq \diam/2}^{n}{ a_{i,j,t} \pi_x (z_j(t) - z_i(t)) } + \sum_{\pi_x z_j > \diam/2}^{n}{ a_{i,j,t} \pi_x (z_j(t) - z_i(t)) }\\
&\leq \sum_{\pi_x z_j \leq \diam/2}^{n}{ a_{i,j,t} \left(  \frac{\diam}{2} - \pi_x z_i(t) \right) } + \sum_{\pi_x z_j > \diam/2}^{n}{ a_{i,j,t} \left( \diam -\pi_x z_i(t) \right)}\\
&= \sum_{\pi_x z_j \leq \diam/2}^{n}{ a_{i,j,t}   \frac{\diam}{2}  } + \sum_{\pi_x z_j > \diam/2}^{n}{ a_{i,j,t}  \diam}  - \sigma \pi_x z_i(t).
\end{align*}
Moreover, using the lower bound $a_{i,j,t} \geq \delta$
\begin{align*}
 \diam \left( \frac{1}{2} \sum_{\pi_x z_j \leq \diam/2}^{n}{ a_{i,j,t}} + \sum_{\pi_x z_j > \diam/2}^{n}{ a_{i,j,t}} \right) &\leq \diam \left( \frac{1}{2} \sum_{\pi_x z_j \leq \diam/2}^{n}{ a_{i,j,t}} + \left(\sigma -\sum_{\pi_x z_j \leq \diam/2}^{n}{ a_{i,j,t}}\right) \right)\\
&= \diam \left( \sigma - \frac{1}{2}\sum_{\pi_x z_j \leq \diam/2}^{n}{ a_{i,j,t}} \right)\\
&\leq \left(\sigma - \frac{n\delta}{4}\right) \diam.
\end{align*}
Then, however,
\begin{align*}
 \pi_x z_i(t+1) &= \pi_x z_i(t) + \sum_{j=1}^{n}{ a_{i,j,t} \pi_x (z_j(t) - z_i(t)) } \leq (1-\sigma)  \pi_x z_i(t) +  \left(\sigma - \frac{n\delta}{4}\right) \diam \\
&\leq (1-\sigma) \diam +  \left(\sigma - \frac{n\delta}{4}\right) \diam =  \left(1 - \frac{n\delta}{4}\right) \diam,
\end{align*}
which shows that $\pi_x z_i(t+1)\in [0, \diam (1-n \delta/4)].$ Accounting for the error term, we get
$$  \diam \left\{\pi_x z_1(t+1), \pi_x z_2(t+1), \dots, \pi_x z_n(t+1) \right\} \leq \left(1 - \frac{n\delta}{4}\right) \diam + 2\varepsilon.$$
If the diameter is indeed disproportionately large
$$ \diam \geq \frac{10\varepsilon}{n \delta},$$
then this can be rearranged as
$$ \varepsilon \leq \frac{n \delta}{10} \diam $$
and therefore
\begin{align*}
 \left(1 - \frac{n\delta}{4}\right) \diam + 2\varepsilon \leq \left(1 - \frac{n\delta}{4}\right) \diam + \frac{n \delta}{5} \diam \leq  \left(1 - \frac{n\delta}{20}\right) \diam.
\end{align*}
Since this is true in every projection, it also holds for the diameter of the original set.
\end{proof}

\textit{Remark.} The argument could be slightly improved because in its current form it
assumes that the error $\varepsilon_i$ has $\| \varepsilon_i(t)\|_{\ell^{\infty}} = \varepsilon$, while we assume $\| \varepsilon_i(t)\|_{\ell^{\infty}} \leq \| \varepsilon_i(t)\|_{\ell^{2}} = \varepsilon$.
This, together with the usual other optimization schemes, should yield an improved estimate on the constant.
 The condition on $\delta$ could also be weakened (at the cost of losing constants). In particular, it would be sufficient in Assumption (1) in our main result to assume that, for every $1 \leq i \leq n$ 
$$ p_{ij} \geq  \frac{1}{10 n | \pi^{-1}(\pi(i))| } ~\mbox{holds for at least} \quad \left(\frac{1}{2} + \varepsilon\right)  | \pi^{-1}(\pi(i))| \quad \mbox{values of} ~j$$
that are in the same cluster $\pi(z_i) = \pi(z_j)$. By adapting the proof, the constant $(1/2 + \varepsilon)$ could be reduced further, however, this is inevitably going to decrease the provable
bounds on the exponential decay rate (which is not an artifact of the method, convergence will slow down).

\section{Proof of the Main Result}\label{sec:proof_main_result}

The rough outline of the argument is as follows: we initialize all points inside $[-0.01,0.01]^2$. We rewrite the gradient
descent method acting on one particular embedded cluster as a dynamical system of the type studied above with an error term. The error term contains $q_{ij}$, which 
depend on distances between points from different clusters. This is difficult to control, especially if the points are far apart. Our strategy will now be as follows: we 
show that the $q_{ij}$ are all under control as long as everything is contained in $[-0.02,0.02]^2$. We use stability of the convex hull to guarantee that all of the embedded
points are within $[-0.02,0.02]^2$ for at least $\ell$ iterations and show that this time-scale is enough to guarantee contraction of the cluster.

\begin{proof}  We start by showing that the $q_{ij}$ are comparable as long as the point set is contained in a small region space.
Let now $\left\{y_1, y_2, \dots, y_n\right\} \subset [-0.02,0.02]^2$ and recall the definitions
$$   q_{ij} = \frac{(1 + \| y_i -  y_j\|^2)^{-1}}{\sum_{k\neq l} (1 +\|  y_k -  y_l\|^2 )^{-1}} \qquad \mbox{and} \qquad Z=\sum_{k\neq l} (1 +\|  y_k -  y_l\|^2 )^{-1}.$$
Then, however, it is easy to see that $0 \leq \|y_i - y_j\| \leq 0.06$ implies
$$  \frac{9}{10} \leq  q_{ij}Z =  (1 + \| y_i -  y_j\|^2)^{-1} \leq 1.$$
We will now restrict ourselves to a small embedded cluster $\left\{ y_i: \pi(i)~\mbox{fixed}\right\}$ and rewrite the gradient descent method as
\begin{align*}
y_i(t+1) = y_i(t) &+\sum_{j \neq i \atop \mbox{\tiny same cluster}}{  (\alpha h) p_{ij} q_{ij} Z (y_j(t) - y_i(t))} \\
&+  \sum_{j \neq i \atop \mbox{\tiny other clusters}}{  (\alpha h) p_{ij} q_{ij} Z (y_j(t) - y_i(t))} \\
&- h \sum_{j \neq i }{ q_{ij}^2 Z (y_j(t) - y_i(t))},
\end{align*}
where the first sum is yielding the main contribution and the other two sums are treated as a small error. Applying our results for dynamical systems of this type requires us to
verify the conditions. We start by showing the conditions on the coefficients to be valid. Clearly,
$$  \alpha h p_{ij} q_{ij} Z  \geq  \alpha h p_{ij} \frac{9}{10} \geq  \frac{\alpha h}{10 n |\pi^{-1}(\pi(i))|}\frac{9}{10} \geq  \frac{9}{100} \frac{\alpha h}{n} \frac{1}{|\pi^{-1}(\pi(i))|} \sim \delta,$$
which is clearly admissible whenever $\alpha h \sim n$. As for the upper bound, it is easy to see that
$$  \sum_{j \neq i \atop \mbox{\tiny same cluster}}{  (\alpha h) p_{ij} q_{ij} Z} \leq  \alpha h \sum_{j \neq i \atop \mbox{\tiny same cluster}}{  p_{ij} } \leq 1.$$
It remains to study the size of the error term for which we use the triangle inequality
\begin{align*}
 \left\| \sum_{j \neq i \atop \mbox{\tiny other clusters}}{  (\alpha h) p_{ij} q_{ij} Z (y_j(t) - y_i(t))} \right\|  &\leq   \alpha h  \sum_{j \neq i \atop \mbox{\tiny other clusters}}{ p_{ij}  \left\|y_j(t) - y_i(t)\right\|}  \\
&\leq 0.06 \alpha h \sum_{j \neq i \atop \mbox{\tiny other clusters}}{ p_{ij} }
\end{align*}
and, similarly for the second term,
\begin{align*}
\left\|  h \sum_{j \neq i }{ q_{ij}^2 Z (y_j(t) - y_i(t))}  \right\|  \leq  h   \sum_{j \neq i }{ q_{ij} \left\| (y_j(t) - y_i(t))\right\| }  \leq 0.06h  \sum_{j \neq i }{ q_{ij}}  \leq \frac{0.1h}{n}.
\end{align*}
This tells us that the norm of the error term is bounded by
$$ \| \varepsilon\| \leq 0.1 h \left( \alpha  \sum_{j \neq i \atop \mbox{\tiny other clusters}}{ p_{ij} } + \frac{1}{n}\right).$$
It remains to check whether time-scales fit. The number of iterations $\ell$ for which the assumption  $\mathcal{Y} \subset [-0.02,0.02]^2$ is reasonable is at least $ \ell \geq 0.01/\varepsilon.$
At the same time, the contraction inequality implies that in that time the cluster shrinks to size
$$ \max\left\{ \frac{10 \varepsilon}{|\pi^{-1}(\pi(i))|  \delta}, 0.01\left(1- \frac{1}{20}\right)^{\ell}\right\} \leq  \max\left\{ \frac{10 \varepsilon}{|\pi^{-1}(\pi(i))| \delta},  8 \varepsilon \right\},$$
where the last inequality follows from the elementary inequality
$$ \left(1 - \frac{1}{20}\right)^{1/100\varepsilon} \leq 8 \varepsilon.$$
\end{proof}

\textit{Remarks.} The proof is relatively flexible in several different spots. By demanding that the initialization $\mathcal{Y}$ is contained in a sufficiently small ball, one can force the quantity $Z q_{ij}$ to be arbitrarily close to 1. We also emphasize that we did not optimize over constants and additional fine-tuning in various spots would yield better constants (at the cost of a more involved argument which is why
we decided against it). The use of the triangle inequality in bounding the error terms is another part of the proof that deserves attention: if the clusters are spread out, then we would
expect the repulsive forces to act from all directions and lead to additional cancellation (which, if it were indeed the case that the $q_{ij}$ do not play a significant role in the clustering that occurs
in the early exaggeration phase, would be an additional reason for the strong similarity to the outcome of the spectral method). It could be of interest to study mean-field-type approximations
to gain a better understanding of this phenomenon.

\bibliographystyle{amsplain}

\end{document}